\newtheorem{theorem}{Theorem}
  \providecommand\BibTeX{{%
    \normalfont B\kern-0.5em{\scshape i\kern-0.25em b}\kern-0.8em\TeX}}}
\begin{document}

%%
%% The "title" command has an optional parameter,
%% allowing the author to define a "short title" to be used in page headers.
\title{Causal Interventional Prediction System for Robust and Explainable Effect Forecasting}

%%
%% The "author" command and its associated commands are used to define
%% the authors and their affiliations.
%% Of note is the shared affiliation of the first two authors, and the
%% "authornote" and "authornotemark" commands
%% used to denote shared contribution to the research.

\author{Zhixuan Chu$^{1,2}$, Hui Ding$^{3}$, Guang Zeng$^{3}$, Shiyu Wang\textsuperscript{3\Letter}, Yiming Li$^{4}$}

\affiliation{%
  \institution{$^1$ The State Key Laboratory of Blockchain and Data Security, Zhejiang University}
  \institution{$^2$ Hangzhou High-Tech Zone (Binjiang) Institute of Blockchain and Data Security }
  \institution{$^3$Ant Group}
  \institution{$^4$Nanyang Technological University}
  \city{Hangzhou}
  \country{China}}
\email{zhixuanchu@zju.edu.cn,dinghui.ding@alibaba-inc.com,senhua.zg@antfin.com,weiming.wsy@antgroup.com,liyiming.tech@gmail.com}

% \author{Zhixuan Chu}
% \affiliation{%
%   \institution{The State Key Laboratory of Blockchain and Data Security, Zhejiang University}
%   \institution{Hangzhou High-Tech Zone (Binjiang) Institute of Blockchain and Data Security }
%   \city{Hangzhou}
%   \country{China}
% }
% \email{zhixuanchu@zju.edu.cn}

% \author{Hui Ding}
% \affiliation{%
%   \institution{Ant Group}
%   \city{Hangzhou}
%   \country{China}}
% \email{dinghui.ding@alibaba-inc.com}

% \author{Guang Zeng}
% \affiliation{%
%   \institution{Ant Group}
%   \city{Hangzhou}
%   \country{China}}
% \email{senhua.zg@antfin.com}

% \author{Shiyu Wang \textsuperscript{\Letter}}
% \affiliation{%
%   \institution{Ant Group}
%   \city{Hangzhou}
%   \country{China}}
% \email{weiming.wsy@antgroup.com}

% \author{Yiming Li}
% \affiliation{%
%   \institution{Nanyang Technological University}
%   % \city{Charlottesville}
%   \country{Singapore}}
% \email{liyiming.tech@gmail.com}

%%
%% By default, the full list of authors will be used in the page
%% headers. Often, this list is too long, and will overlap
%% other information printed in the page headers. This command allows
%% the author to define a more concise list
%% of authors' names for this purpose.
\renewcommand{\shortauthors}{Chu, et al.}

%%
%% The abstract is a short summary of the work to be presented in the
%% article.
\begin{abstract}
Although the widespread use of AI systems in today's world is growing, many current AI systems are found vulnerable due to hidden bias and missing information, especially in the most commonly used forecasting system. In this work, we explore the robustness and explainability of AI-based forecasting systems. We provide an in-depth analysis of the underlying causality involved in the effect prediction task and further establish a causal graph based on treatment, adjustment variable, confounder, and outcome. Correspondingly, we design a causal interventional prediction system (CIPS) based on a variational autoencoder and fully conditional specification of multiple imputations. Extensive results demonstrate the superiority of our system over state-of-the-art methods and show remarkable versatility and extensibility in practice. 
\end{abstract}

%%
%% The code below is generated by the tool at http://dl.acm.org/ccs.cfm.
%% Please copy and paste the code instead of the example below.
%%

\begin{CCSXML}
<ccs2012>
       <concept_id>10002951.10003227.10003351</concept_id>
       <concept_desc>Information systems~Data mining</concept_desc>
       <concept_significance>500</concept_significance>
       </concept>
   <concept>
       <concept_id>10010405.10003550</concept_id>
       <concept_desc>Applied computing~Electronic commerce</concept_desc>
       <concept_significance>500</concept_significance>
       </concept>
 </ccs2012>
\end{CCSXML}

\ccsdesc[500]{Information systems~Data mining}
\ccsdesc[500]{Applied computing~Electronic commerce}

%%
%% Keywords. The author(s) should pick words that accurately describe
%% the work being presented. Separate the keywords with commas.
\keywords{causal inference, robustness, explainability, regression}

%% A "teaser" image appears between the author and affiliation
%% information and the body of the document, and typically spans the
%% page.
% \begin{teaserfigure}
%   \includegraphics[width=\textwidth]{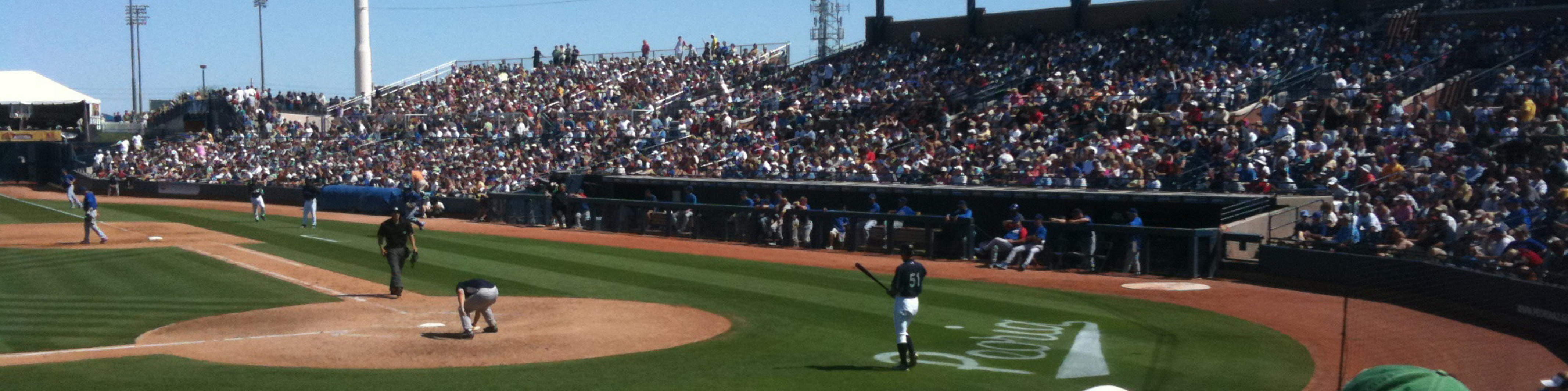}
%   \caption{Seattle Mariners at Spring Training, 2010.}
%   \Description{Enjoying the baseball game from the third-base
%   seats. Ichiro Suzuki preparing to bat.}
%   \label{fig:teaser}
% \end{teaserfigure}

%%
%% This command processes the author and affiliation and title
%% information and builds the first part of the formatted document.
\maketitle

\section{Introduction}\label{Sec: Introduction}

The rapid development of Artificial Intelligence (AI) has enabled the deployment of various AI-based systems in practical applications. AI-based forecasting is one of the industry's most commonly used AI-based systems, which is to make predictions based on past and present data via AI techniques. However, many of the AI algorithms implemented nowadays fail to guarantee robustness and explainability in the face of complex industrial applications. For example, when working in crucial financial scenarios, the performance of the AI-based forecasting systems is always unreliable due to hidden bias and insurmountable missingness, which can result in catastrophic failures in the next set of business operations and cash flow management.

In this work, we take the marketing campaign effect forecasting \cite{chu2022hierarchical} as an example to explore the issues about the robustness and explainability involved in the AI-based forecasting system \cite{chu2024causal}. For example, Alipay, the top Fintech company, provides many marketing campaigns (e.g., interest-discounted credit) to support small or micro businesses. To reduce the liquidity risk and provide decisions for financing plans, the company needs to predict the effect of each marketing campaign and make financial planning every month in advance. According to the actual business requirements, the effect prediction for a marketing campaign faces four primary challenges.

Firstly, due to the enormous quantity and heterogeneity of customers, there exist various marketing campaigns that the strategy team designed, especially with varying degrees, duration, forms, etc. The feature distribution of customers under different marketing campaigns is unfixed. Therefore, there may be bias involved in the marketing campaign effect forecasting task due to the mismatch of customer feature distributions across the training data and target data. Secondly, it is unlikely that all the required variables for customers are available. Some variables (e.g., personal preferences, socioeconomic, willingness to demand) are hidden or unmeasured, so it is impossible to accurately estimate the marketing campaign effect based on the observed variables without further assumptions. For example, socioeconomic status is an important attribute, but it is hard to define and measure. A common practice is to rely on the so-called ``proxy variables''. For instance, instead of directly measuring socioeconomic status, we might get a proxy for it by knowing their zip code and income. Reasonably using proxy variables for hidden information is a crucial underpinning to effect prediction. Thirdly, besides the direct impact of a marketing campaign on the outcome, the marketing campaign always has an additional impact on the outcome via external factors, e.g., special holidays or festivals, shopping promotion activities, and so on. Although these external factors have nothing to do with the marketing campaign assignment, they might enlarge or shrink the marketing campaign's effect on the customers. Therefore, effectively utilizing such external information to help prediction is also very difficult. Finally, only part of the accessible variables are observed before the start of the campaign. As the marketing campaign comes into effect, more and more information (e.g., the detailed information of customers assigned to the marketing campaign) will be observed. How to obtain reliable predictions only based on limited information is crucial for the company's decision and operation.

To sum up the aforementioned challenges, the general methods based on vanilla regression models cannot effectively solve this kind of practical problem. According to the different roles of variables, in this paper, we provide an in-depth analysis of the underlying causality involved in the effect prediction task, and we further establish a causal graph based on marketing campaign assignment (treatment), external factors (adjustment variable), customer's features (confounder), and effect (outcome). Correspondingly, we design a \textbf{C}ausal \textbf{I}nterventional \textbf{P}rediction \textbf{S}ystem (CIPS) based on the variational autoencoder (VAE) and fully conditional specification of multiple imputation (FCSMI). Our system imputes the missing variables, infers the hidden confounders, and finally predicts the effect outcome under the causal intervention.

\section{Related Work}

\subsection{Structural Causal Model} The most commonly used framework in causal inference is the Structural Causal Model (SCM)~\cite{pearl2016causal}. SCM describes the causal mechanisms of a system where a set of variables and the causal relationship among them are modeled by a set of simultaneous structural equations. In an SCM, if a variable is the common cause of two variables, it is called the confounder. The confounder will induce a spurious correlation between these two variables to disturb the recognition of the causal effect between them. In addition, in many real applications, the confounder is always hidden or unobserved. In an SCM, if we want to deconfound two variables to calculate the true causal effect. The backdoor adjustment is the most direct method to eliminate the spurious correlation by approximating the ``physical intervention'' \cite{pearl2018book,yang2021causal}. In the SCM framework, besides the confounder, our work also involves treatment (the action that applies to a unit, i.e., marketing campaigns), outcome (the result of that treatment when applied to that unit, i.e., marketing campaign effect), and adjustment variables (predictive of outcomes but not associated with treatment, i.e., external factors).

\subsection{Variational Autoencoder}

Variational autoencoder (VAE) \cite{kingma2014semi} is a generative model that generates high-dimensional samples from a continuous space. In the probability model, the probability of data $X$ can be computed by:
\begin{equation}
    p(x) = \int p(x, z)dz =\int p(z)p(x|z)dz,
\end{equation}
where it is approximated by maximizing the evidence lower bound (ELBO):
\begin{equation}
    \log p_\theta(x) \geq \mathbb{E}_{z \sim q_\phi(z|x)} [\log p_\theta (x|z)] - \*{KL}(q_\phi(z|x) \parallel p(z)),
    \label{vae}
\end{equation}
where $p_\theta (x|z)$ denotes the decoder with parameters $\theta$, $q_\phi(z|x)$ is obtained by an encoder with parameters $\phi$, and $p(z)$ is a prior distribution, such as Gaussian distribution. $KL$ denotes the Kullback-Leibler divergence between two distributions. VAE can be used in learning individual-level causal effects from observational data \cite{louizos2017causal}. It utilizes latent variable modeling to simultaneously estimate the unknown latent space summarizing the confounders and the causal effect. However, it assumes all observed variables are the confounders. This assumption is unreasonable in the real case. Compared with the standard VAE method, we divide the variables into treatment features, confounders, and adjustment variables according to the real implication of each kind of observed variable. In addition, we apply this idea to the forecasting system instead of the original treatment effect estimation task.

\section{Preliminary}
 
\subsection{Causality Analysis}
\begin{figure}[t]
  \centering
  \includegraphics[width=1\columnwidth]{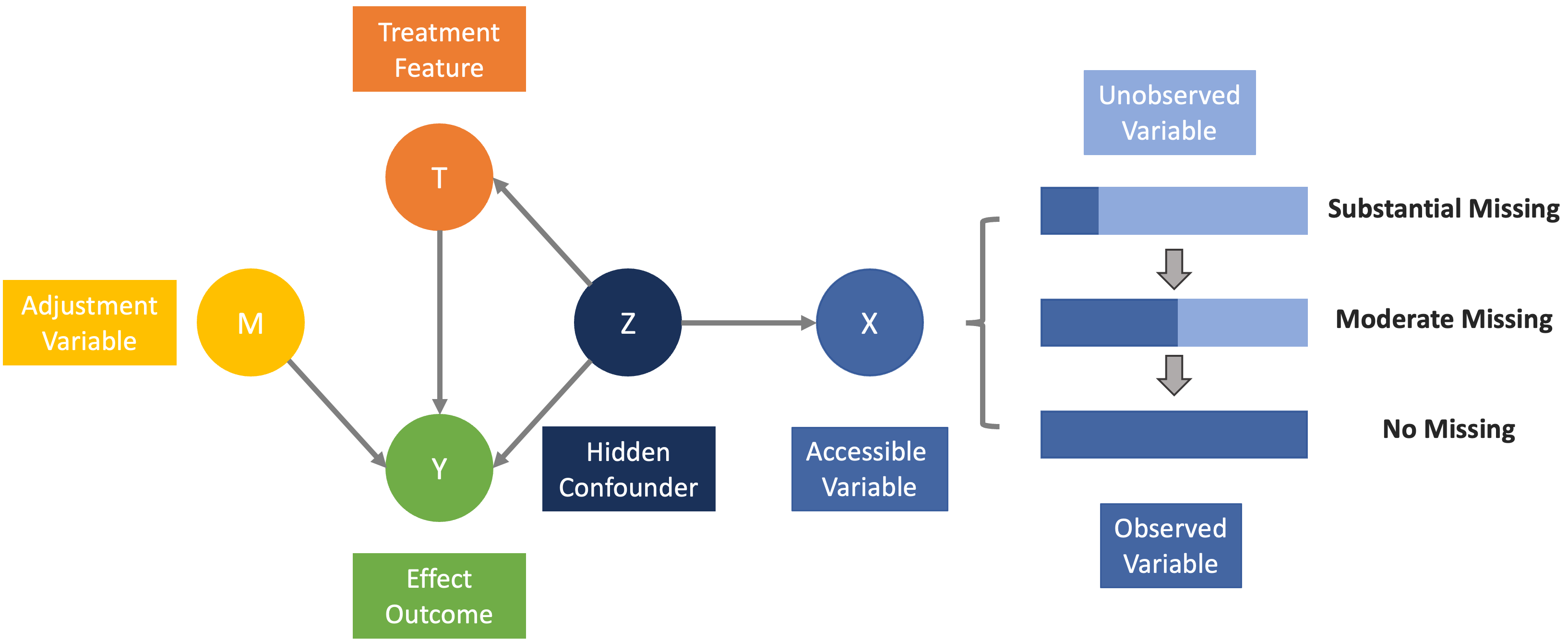}
\vspace{-6mm}
  \caption{\label{fig:framework} The causal graph involved in the CIPS.}
\vspace{-6mm}
\end{figure}

Taking the challenges mentioned into account, we incorporate the casual analysis into the effect prediction task. The causal analysis in Figure \ref{fig:framework} can help the model to tease out the relationship among different variables for predicting the effect and shed light on the impact of different variables.

The features of customer $X$ will act as the confounder, which influences both the marketing campaign assignment (treatment) and the effect of the marketing campaign (outcome) \cite{chu2024task}. The confounder will lead to the selection bias that different customers will be assigned to different types of marketing campaigns due to their features. Because the strategy team will continually assign various marketing campaigns to distinct customers according to certain rules, when performing the prediction of effects, we will face distribution shifts due to the perturbations in the pairs of assigned marketing campaigns and customers. Thus, the selection bias will induce a spurious correlation between marketing campaign assignment and effect outcome to disturb the recognition of the causal effect between them. Therefore, we aim to utilize the structural causal model to deconfound marketing campaigns and effect outcomes to calculate the true causal effect. Now, we are facing the second challenge that some confounders (personal preferences, socioeconomic, willingness to demand) are hidden or unmeasured. We hope to use the limited observed variables (``proxy variables'') $X$ to infer the hidden confounders $Z$. Of course, one of the promises of successfully inferring the hidden confounders is the existence of myriad proxy variables in the observed variables for hidden confounders. In addition, to increase the prediction accuracy given incomplete observed variables, we leverage the multiple imputation, that imputed datasets are sampled from their predictive distribution based on the observed data. The imputation procedure fully accounts for uncertainty in predicting the missing values by injecting appropriate variability into the multiple imputed values \cite{sterne2009multiple}. Besides the direct impact of the marketing campaign on the effect outcome, the marketing campaign always has an additional impact on the outcome via external factors, which might enlarge or shrink the marketing campaign's effect on customers. These variables can be referred to as adjustment variables predictive of effect outcomes but not associated with marketing campaign assignments. 

Considering all these components, we have established one complete causal graph for the effect prediction task, including the treatment, adjustment variable, hidden confounder, ``proxy'' accessible variable, incomplete observed variable, and effect outcome. We aim to apply the structural causal model based on a variational autoencoder to the established causal graph to identify the true causal effect and predict the unbiased effect outcome.
\begin{figure*}[t]
  \centering
  \includegraphics[width=1.7\columnwidth]{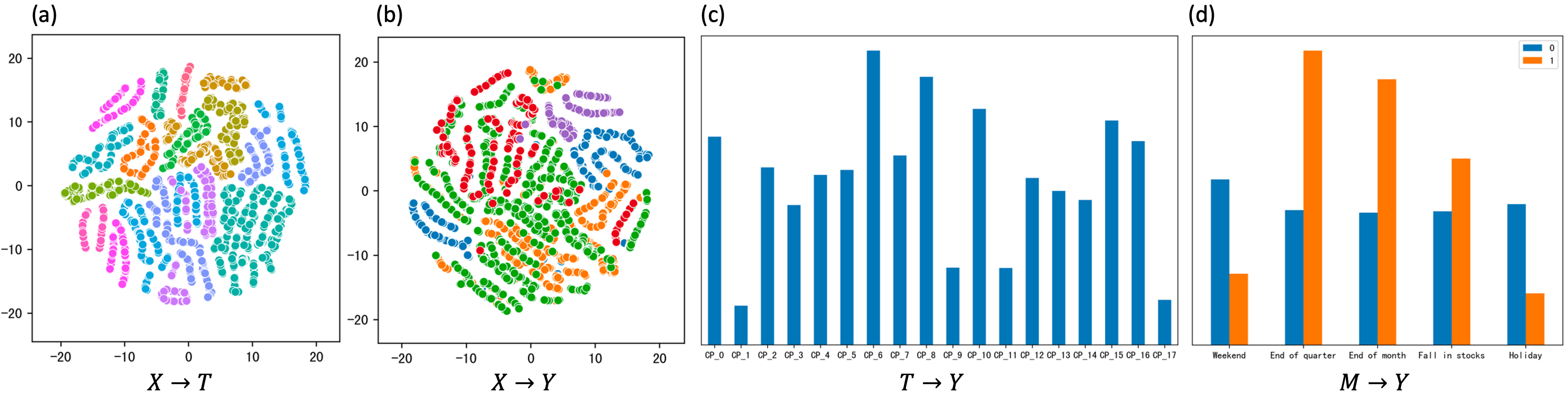}
  \vspace{-3mm}
  \caption{\label{fig:causal graph} The causal analysis in the real marketing campaign dataset including (a) $X \rightarrow T$ the distribution of marketing campaigns (treatment) in the customer feature  (confounder) space; (b) $X \rightarrow Y$ the distribution of different levels of outcome in the customer feature space (confounder); (c) $T \rightarrow Y$ the distribution of outcomes for different marketing campaigns (treatment); (d) $M \rightarrow Y$ the distribution of outcomes under different external factors (adjustment variable). Axis labels are omitted due to the nonpublic nature of the data.}
  \vspace{-3mm}
\end{figure*}

\subsection{Clarification on Treatment Effect Estimation}
The treatment effect estimation is arguably the most widely studied problem in causal inference \cite{rubin1974estimating,yao2021survey,chu2023continual,chu2023causal}. As a fundamental paradigm in causal inference, the Potential Outcomes Framework (POF) \cite{holland1986statistics, sekhon2008neyman} leverages factual outcomes and counterfactual outcomes to characterize treatment effects. In the conventional treatment effect estimation task, we aim to estimate the counterfactual outcome and then compare it with the observed outcome to calculate the treatment effect. For example, in the observational medication data, we only observe the factual outcome and never the counterfactual outcomes that would potentially have happened if they had chosen a different treatment option. Therefore, for the conventional treatment effect estimation task, the core purpose is to estimate the counterfactual outcome while reducing the selection bias. However, in our marketing campaign effect prediction task, we can easily get the true effect by a single group retrospective study, i.e., the performance comparison before and after a marketing campaign on the same customers. Thus, our \textbf{\emph{primary purpose is to predict the performance}} of the future marketing campaigns on incoming customers while controlling for the confounders (customer's features between marketing campaign assignment and effect outcome) and considering the external factors, \textbf{\emph{rather than estimating the counterfactual effect}} under other marketing campaigns that were not assigned. \textbf{\emph{Therefore, this marketing campaign effect prediction task is essentially a regression task, which is different from the treatment effect estimation task in causal inference. The traditional treatment effect estimation methodologies based on POF do not apply to our forecasting task}}.

\subsection{Problem Formulation}
Based on the above causality analysis, we can re-formulate this forecasting task. Denoting the effect outcome of subject $i$ for marketing campaign $t$ by $y_{i,t}$, our goal is to model the conditional distribution $P(y_{i,t} | x_i, m_i, do(t_i))$, where confounder $x_i$ denotes the feature variables of subject $i$ (e.g., age, incoming, job, credit record). Adjustment variable $m$ denotes the external factors (e.g., special holidays or festivals, shopping promotion activities, economic situations). The treatment variable $t$ denotes the assigned marketing campaign features, including the structured data (e.g., degrees, duration, forms, and date) and unstructured data (e.g., the text description of the campaign from the strategy team). 
For the observed feature variables $x_i \in \mathcal{R}^d$ of subjects, the number $d$ will gradually increase over time from the start of the marketing campaign. During the marketing campaign, the $d$ equals $D$, which is the maximum value of the dimension of accessible variables. Therefore, we need to (1) impute the incomplete observed variables to the complete accessible variables that are observed after the marketing campaigns; (2) infer the hidden confounders based on the imputed accessible variables; (3) consider the hidden confounders, adjustment variables, and treatment variables to predict the effect outcome.

\subsection{Evidence of Causal Relationship in Real Case}
Here, we use real data to demonstrate the existence of a causal relationship in the marketing campaign task. A detailed description of the dataset is provided in the ``Experiments'' section. As shown in (a) and (b) of Figure \ref{fig:causal graph}, there exists an obvious bias in the marketing campaign (treatment assignment) and outcomes. In (c), we can observe the distinct effect outcomes for different marketing strategies, controlling for the customers and external factors. In (d), external factors such as the weekend, end of the month, end of the quarter, holiday, and so on have a significant influence on the outcome, controlling for the customers and marketing strategy assignment. These figures can give an intuitive understanding of the causality implied in the real case. Some special things to note about this causal relationship are that it does not matter whether there is or is not an edge from $X$ to $T$ or $X$ to $Y$. This is because we intervene on $T$, and $Y$ is independent of $X$ given hidden confounder $Z$. They can be replaced by $Z \rightarrow T$ and $Z \rightarrow Y$.

\subsection{Causal Inference Identification}
 
The central question in the analysis of causal effect is the \textit{identification}: Can the controlled (post-intervention) distribution, be estimated from observed data governed by the pre-intervention distribution\cite{judea2010introduction}? Therefore, we propose the following theorem to prove the post-intervention distribution is identifiable.
\begin{theorem}
If the $p(y| X, M, T= t, Z)$ can be inferred from the observations of $(X,M,T,y)$, then $p\left( y | X, M, do( T= t)\right)$ is identifiable. 
\end{theorem}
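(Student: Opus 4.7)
The plan is to derive $p(y|X,M,do(T=t))$ as an integral over the hidden confounder $Z$ and then to show that every factor in the resulting expression is either the quantity assumed inferable in the hypothesis or is already determined by the observational joint $p(X,M,T,y)$. First I would apply the law of total probability to introduce $Z$, obtaining
\begin{equation}
p(y|X,M,do(T=t)) = \int p(y|X,M,T=t,Z)\,p(Z|X,M,do(T=t))\,dZ.
\end{equation}
The $do$-operator inside the integrand collapses to ordinary conditioning because, once $Z$ is held fixed and we also condition on $X$, every back-door path from $T$ to $y$ in the graph of Figure~\ref{fig:framework} is blocked --- this is Pearl's second rule of do-calculus.

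Next I would simplify the mixing measure using the graph. Because intervening on $T$ only removes the incoming edges of $T$, the distribution of any non-descendant of $T$ is invariant under the intervention; both $X$ and $Z$ are non-descendants of $T$, so $p(Z|X,M,do(T=t)) = p(Z|X,M)$. The adjustment variable $M$ has its sole outgoing edge into $y$ in the graph, so $M$ is $d$-separated from $Z$ given $X$, yielding $p(Z|X,M) = p(Z|X)$. Substituting these two identities gives
\begin{equation}
p(y|X,M,do(T=t)) = \int p(y|X,M,T=t,Z)\,p(Z|X)\,dZ.
\end{equation}
The first factor is inferable by the hypothesis of the theorem; the second, $p(Z|X)$, is recoverable from the observational joint via the proxy-variable identifiability argument used in CEVAE~\cite{louizos2017causal}, and is exactly what the VAE encoder $q_\phi(Z|X)$ in CIPS is trained to approximate. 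Together these two observations express the right-hand side entirely in terms of quantities derivable from observational data, which is the definition of identifiability.

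The hard part is the latent-variable step, because neither the do-calculus nor any $d$-separation argument can on its own justify that $p(Z|X)$ is recoverable without ever observing $Z$. I would handle this by stating, as a standing assumption of the causal model, that $X$ constitutes a sufficiently rich set of proxies for $Z$ in the sense required by the CEVAE identifiability result, and then cite that result rather than reprove it. Once this is granted, what remains is a single application of rule~2 of do-calculus together with one $d$-separation check on Figure~\ref{fig:framework}, both of which are essentially mechanical.
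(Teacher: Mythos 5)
Your proposal is correct and follows essentially the same route as the paper's proof: introduce $Z$ by total probability, strip the $do$-operator from both factors via the rules of do-calculus (using that $Z$ blocks the back-door path and that $X,Z$ are non-descendants of $T$), and drop $M$ from $p(Z|X,M)$ by $d$-separation, arriving at $\int p(y|X,M,T=t,Z)\,p(Z|X)\,dZ$. The one place you go beyond the paper is in explicitly flagging that the premise does not by itself deliver $p(Z|X)$ and that a CEVAE-style proxy-identifiability assumption must be invoked --- the paper glosses over this by asserting that all quantities in the final expression are identifiable given the premise, so your added care is a clarification rather than a different argument.
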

 
\begin{proof}
\begin{equation}
\begin{split}
& \hspace{0.7cm} p\left( y | X, M, do( T= t)\right) \\
& \stackrel{(1)}{=} \int_{ Z} p\left( y| X, M,do( T= t), Z\right)p\left( Z| X, M,do( T= t)\right) \, d Z ,  \\
&\stackrel{(2)}{=}  \int_{ Z} p\left( y| X, M, T= t, Z\right)p\left( Z| X,M \right) \, d Z , \\
&\stackrel{(3)}{=}  \int_{ Z} p\left( y| X, M, T= t, Z\right)p\left( Z| X\right) \, d Z , \\
&\stackrel{(4)}{=}  \int_{Z}p\left(y|M, T=t,Z\right)p\left(Z|X\right) \, dZ, \\
& = \mathbb{E}_{p(z|x)} [p(y| M, T= t, Z)], \label{eq:ident}
\end{split}
\end{equation}
where equality (1) is by the Bayes'rule. Controlling for the hidden confounder $Z$, equality (2) can be taken by the rules of do-calculus. $M$ is the defined adjustment variable in the causal graph, which is only predictive of outcome $y$ and not related to marketing campaign assignment $T$. Thus, we can observe a collider ($M \rightarrow y \leftarrow Z$) in this causal graph, so unconditionally, M and Z are statistically independent of each other in equality (3). For equality (4), $y$ is independent of $X$ given $Z$, so we attain $p\left(y |X, M, T=t, Z\right) = p\left(y|M, T=t,Z\right)$. 

In conclusion,  the quantities in the final expression of Eq.~\eqref{eq:ident} are identifiable if we know the theorem's premise. In the next section, we will show how we estimate  $p(y| M, T= t, Z)$ from observations of $(X,M,T,y)$.
\end{proof}

\section{Methodology}

Now we will shed light on how to use observational data $(X, M, T, y)$ to infer the distribution $p(y| M, T= t, Z)$. We design a \textbf{C}ausal \textbf{I}nterventional \textbf{P}rediction \textbf{S}ystem (CIPS) based on variational autoencoder (VAE) and multiple imputation (MI) to predict the outcome by imputing the missing variables (to complete accessible variable), inferring the hidden confounders, controlling for adjustment variables, and mitigating the selection bias caused by confounders.

\subsection{Inference model}
To infer the target distribution, the normal neural network with nonlinear functions makes inference intractable, so we assume the true but intractable posterior takes on an approximate Gaussian form by variational inference. Therefore, we use VAE, including reference and generative models, to get a fixed form posterior approximation over the latent variables $Z$ in the inference model, and then reconstruct the input and predict the outcome in the generative model. The causal graph shows that the true posterior over $Z$ depends on $X, T$, and $y$, but it is statistically independent of $M$. Therefore we employ the following posterior approximation:
\begin{align*}
    q(\*z_i|\*x_i, \*t_i, y_i) =& \prod_{j=1}^{D_z}\mathcal{N}(\mu_{ij}={\!\mu_{i}}^{[j]}, \sigma^2_{ij} = {\!\sigma_{i}^2}^{[j]}) \\
    \!\mu_{i},\!\sigma^2_{i} =& g_1(\*t_i, \*x_i, y_i),
\end{align*}
where $[j]$ represents the $j$-th element in the vector and $g_1(\cdot)$ is a fully connected neural network with parameters $\phi_1$. 

For out-of-sample predictions, i.e., new subjects, we are required to complement the necessary information $y$ before inferring the distribution over $z$. For this reason, we will introduce an auxiliary distribution \cite{louizos2017causal} that will accommodate the inference model to infer the target distribution. Specifically, we will employ the following distributions for the effect outcomes $y$: 
\begin{align*}
q(y_i|\*x_i,\*m_i ,\*t_i) =& \mathcal{N}(\mu_i, \sigma_{i}^2), \\
\mu_i, \sigma_{i}^2 =& g_2(\*x_i,\*m_i,\*t_i)\label{eq:y_contt},
\end{align*}
where $g_2(\cdot)$ is a fully connected neural network with variational parameters $\phi_2$. 

In addition, when conducting the prediction for new subjects, we also face the problem of incomplete information for subjects. We propose to leverage the multiple imputation to complement the incomplete observed variable ($\*x_i \in \mathcal{R}^d$ where $d<D$) to the accessible variable ($\*x_i \in \mathcal{R}^D$). Multiple imputation for missing data is rooted in the Bayesian framework for statistical inference. Within this framework, the task of imputing missing values, $X_{mis}$, in a data set equates to a random draw of an imputed value from the posterior predictive distribution of the missing data, which we will denote as $k(X_{mis}|X_{obs},\theta)$, where $\theta$ is the vector of parameters that uniquely define this predictive distribution for the missing values \cite{liu2015multiple}. In the common situation where the missing data problem is multivariate, has an arbitrary pattern of missing values, and may include variables of different types (continuous, nominal, binary, ordinal), it is analytically difficult or impossible to evaluate the true expression for the joint posterior distribution. In such cases, we use fully conditional specification (FCS) \cite{van2006fully} that can approximate draws from the analytically intractable joint posterior \cite{liu2015multiple}. The FCS method uses an iterative sequence of draws from conditional distributions (linear regression for continuous, logistic regression for binary and ordinal, discriminant function method for nominal categorical) to simulate draws from the highly complex joint posterior distribution, including a set of variables with mixed distributional type. 

Instead of drawing the imputations from a pre-specified joint distribution, FCS imputations are generated sequentially by specifying an imputation model for each variable given a set of conditional densities, one model for each incomplete variable. Therefore, an appropriate regression model can be adopted for each variable with a great deal of flexibility \cite{lee2010multiple,van2011mice}. We assume $X$ is from the multivariate distribution $P(X|\theta)$ that can be completely specified by $\theta$, a vector of unknown parameters. The posterior distribution of $\theta$ is obtained by iterative sampling from conditional distributions of the form
\begin{align*}
p(&X^{[1]}|X^{[2]},X^{[3]},...,X^{[D]},\theta_1) \\
 &\vdots  \\
p(&X^{[D]}|X^{[1]},X^{[2]},...,X^{[D-1]},\theta_D)  
\end{align*}
FCS MI draws imputations based on the Gibbs sampler by iterating over the conditional densities and sequentially filling in the current draws of each variable \cite{carlin2008bayesian}. The $t$-th iteration of the Gibbs sampler is defined as
\begin{align*}
 \hat{\theta}_1^{(t)} &\sim p(\theta_1 |X_{obs}^{[1]}, X^{[2](t-1)}, X^{[3](t-1)},...,X^{[D](t-1)}), \\
  \hat{X}^{[1](t)} &\sim p( X^{[1]} | X_{obs}^{[1]}, X^{[2](t-1)}, X^{[3](t-1)},...,X^{[D](t-1)}, \hat{\theta}_1^{(t)} ), \\
  &\vdots  \\
 \hat{\theta}_D^{(t)} &\sim p(\theta_D |X^{[1](t-1)}, X^{[2](t-1)}, X^{[3](t-1)},...,X_{obs}^{[D]}), \\
 \hat{X}^{[D](t)} &\sim p( X^{[D]} | X^{[1](t-1)}, X^{[2](t-1)}, X^{[3](t-1)},...,X_{obs}^{[D]},\hat{\theta}_D^{(t)}),
\end{align*}
When the iteration reaches convergence, the current draws are referred to as the first imputed values. The iteration is repeated until the desired number of imputation datasets has been achieved.
 
\subsection{Generative model}
For now, instead of conditioning on observations $(X,M,T,y)$, we can condition on the $(Z,M,T,y)$ with latent variables $Z$ in the generative model. $q(\*z_i|\*x_i, \*t_i, y_i)$ obtained in the reference model is the approximation to the posterior of the generative model $p(\*x, \*z, \*t, y)$ due to the fact that the true posterior is intractable. Let the prior over the latent variables be the centered isotropic multivariate Gaussian:
\begin{align*}
p(\*z_i) = \prod_{j=1}^{D_z}\mathcal{N}(z_{ij}| 0, 1), 
\end{align*}
where $z_{ij}$ are mutually independent Gaussian variables. Because VAE is one generative model, each component of features relying on the latent variables $Z$ can be generated as \cite{kingma2013auto,rezende2014stochastic}:
\begin{align*}
p(\*x_i|\*z_i) = \prod_{j=1}^{D_x}p(x_{ij}| \*z_i);\quad
p(\*t_i|\*z_i) = \prod_{j=1}^{D_t}p(t_{ij}| \*z_i),
\end{align*}
where $p(\*x_{ij}|\*z_i)$ and $p(\*t_{ij}| \*z_i)$ are probability distributions for the observed covariates and treatment features. $D_x$, $D_z$, and $D_t$ are the dimension of $\*x$, $\*z$, and $\*t$. For a continuous outcome, we parametrize the probability distribution as a Gaussian with its mean and variance
\begin{align*}
p(y_i|\*m_i, \*t_i, \*z_i) &= \mathcal{N}(\mu=\hat{\mu}_i, \sigma^2=\hat{\sigma}_i^2) \\
\hat{\mu}_i, \hat{\sigma}_i^2 &= f(\*m_i, \*t_i, \*z_i) 
\end{align*}
where $f(\cdot)$ is a fully connected neural network with a nonlinear activation function.

So far, we have established the whole framework for our CIPS system. The KL divergence involved in Eq.~(\ref{vae}) can be computed and differentiated without estimation \cite{kingma2013auto}. Therefore, except for the FCS MI module, we can now form a single objective for the inference and generative networks, the variational lower bound of this graphical model \cite{louizos2017causal}:
\begin{align*}
\mathcal{L}  &= \sum_{i=1}^{N}\mathbb{E}_{q(\*z_i|\*x_i,\*t_i,y_i)}[\log p(\*x_i, \*t_i|\*z_i) \\
&+ \log p(y_i|\*m_i, \*t_i, \*z_i) \\
&+ \log p(\*z_i) - \log q(\*z_i|\*x_i,\*t_i,y_i)] \\
&+ \sum_{i=1}^{N}\big( \log q(y_i=y^*_i|\*m^*_i ,\*x^*_i, \*t^*_i)\big). 
\end{align*}
where $\*x_i,\*t^*_i$, and $y^*_i$ are the values in the training set for inferring the auxiliary distribution.

 \section{Experiments}
\subsection{Dataset}\label{dataset}

To demonstrate the effectiveness and practicability of our forecasting system in real applications, we conduct experiments on real datasets collected by a Fintech company. In real business scenarios, we predict the effects of two kinds of activities, i.e., \textit{company strategy} and \textit{marketing campaign}. Company strategy refers to the strategy with a consideration of the trade-off between the company's development and the assessment of risk, which has a great impact on the user's credit limit, financial service access restrictions, etc. Marketing campaigns are sets of strategic activities that promote a business's goal or objective. A marketing campaign could be used to promote a product, a service, or the brand as a whole, e.g., offering discounts, reducing interest, and providing coupons. As shown in Figure \ref{fig:correlation}, we can observe that the correlations among all variables in the \textit{company strategy} and \textit{marketing campaign} datasets are different. In addition, unlike the marketing campaign that only has a positive effect, the company strategy can lead to either positive or negative effects on the business. Thus, due to the significant difference between these two datasets, we train the models and conduct the predictions for these two kinds of activities, respectively.

For \textit{company strategy} and \textit{marketing campaign}, each dataset has four types of information, i.e., the features of the assigned activity (treatment) such as the launch time of the activity, the strength of discount, the duration of the activity, and so on; the effect of activity (outcome); the external factors (adjustment variable) such as the special holidays or festivals, the shopping promotion activities, economic situations, and so on; the covariates of the customer (observed confounders) such as age, income, job, education, credit rating and so on. Since detailed information about customers is obtained slowly over time, in order to simulate the real scene, we consider three scenarios to represent the different degrees of missingness among the incrementally available observational data, including the \emph{substantial missingness}, \emph{moderate missingness}, and \emph{no missingness}. The marketing campaign dataset contains 9,820 records and 51 features. The company strategy dataset contains 7,374 records and 52 features. We randomly sample $60\%$ and $20\%$ of units as the training and validation set and let the remaining be the test set. A detailed description of datasets is provided in Table \ref{table:dataset}.

\begin{table}[t!]
\centering
\vspace{-3mm}
\caption{The detailed description of datasets.}\label{table:dataset}
 \vspace{-3mm}
\scalebox{0.75}{
\begin{tabular} {m{2cm}<{\centering}cccc}

\bottomrule
Covariate & Causal type & Data type &  \textbf{Campaign} & \textbf{Strategy}\\
\hline
Activity description & Treatment & Unstructured  &  Text & Text \\
 & &Structured  &  3 & 3 \\
\hline
Customer features& Confounder & Structured & 35 &  35 \\
\hline
 & & Apriori-known  & 11 & 11 \\
External factors & Adjustment & Unknown (Categorical) & 1 & 2\\
 & &  Unknown (Continuous) & 1 &  1\\
\bottomrule
\end{tabular}}
\end{table}

\begin{table}[t!]
\centering
\vspace{-3mm}
  \caption{The mean and standard error of MAPE for three scenarios, i.e., no missing, moderate missing, and substantial missing.}
  \vspace{-3mm}
  \label{result}
  \scalebox{0.8}{
  \begin{tabular}{lccc}
    \toprule
   \textbf{Marketing campaign} &    No missing  &   Moderate missing&    Substantial missing \\
    \midrule
    Lasso  & $  14.48\pm8.40$ & $  18.25\pm9.81$ & $  24.54\pm12.65$ \\
    
    SVR  & $  8.62\pm5.52$ & $  13.45\pm7.98$ & $  20.34\pm10.83$ \\
    DNN  & $  7.82\pm 4.31$ & $  12.63\pm 6.40$ & $  19.47\pm 8.72$ \\
    XGBoost  & $  6.10\pm 4.28$ & $  10.61\pm5.32$ & $  17.26\pm7.58$  \\
    FKNNR  & $  6.15\pm4.42$ & $  9.37\pm5.63$ & $  15.47\pm7.83$ \\
    Transformer  & $  5.70\pm3.41$ & $  9.09\pm5.38$ & $  16.03\pm7.71$ \\

    \midrule
    CIPS (SMI) & $  \textbf{4.55} \pm \textbf{2.12}$ & $  8.12\pm 4.29$ & $  13.71\pm7.02$ \\
     CIPS (Ours) & $  \textbf{4.55} \pm \textbf{2.12}$ & $  \textbf{7.45}\pm \textbf{3.07}$ & $  \textbf{12.23}\pm \textbf{5.98}$ \\
    \bottomrule

    \toprule
    \textbf{Company strategy} &    No missing  &   Moderate missing &    Substantial missing\\
    \midrule
    
    Lasso  & $  28.85\pm 10.92$ & $  35.95\pm 17.38$ & $  42.66\pm 20.12$ \\
    
    SVR  & $  23.19\pm 9.62$ & $  32.81\pm14.22$ & $  39.74\pm 18.18$ \\
    DNN  & $  26.32\pm9.47$ & $  33.47\pm 15.62$ & $  40.99\pm 19.22$ \\
    XGBoost  & $  18.45\pm 8.55$ & $  26.12\pm 10.12$ & $  35.18\pm 16.31$  \\
    FKNNR  & $  14.18\pm 7.30$ & $  22.65\pm9.01$ & $  32.14\pm14.17$ \\
    Transformer  & $  8.44\pm 5.32$ & $  19.42\pm 9.31$ & $  34.68\pm 14.31$ \\

    \midrule
    CIPS (SMI) & $ \textbf{3.88} \pm \textbf{2.01}$ & $  10.17\pm 7.99$ & $  17.06\pm 12.15$ \\
     CIPS (Ours) & $ \textbf{3.88} \pm \textbf{2.01}$ & $ \textbf{7.58}\pm \textbf{4.24}$ & $ \textbf{11.98}\pm \textbf{6.45}$ \\
    \bottomrule

  \end{tabular}}
   \vspace{-3mm}
\end{table} 

\begin{figure}[t]
  \centering
  \includegraphics[width=0.7\columnwidth]{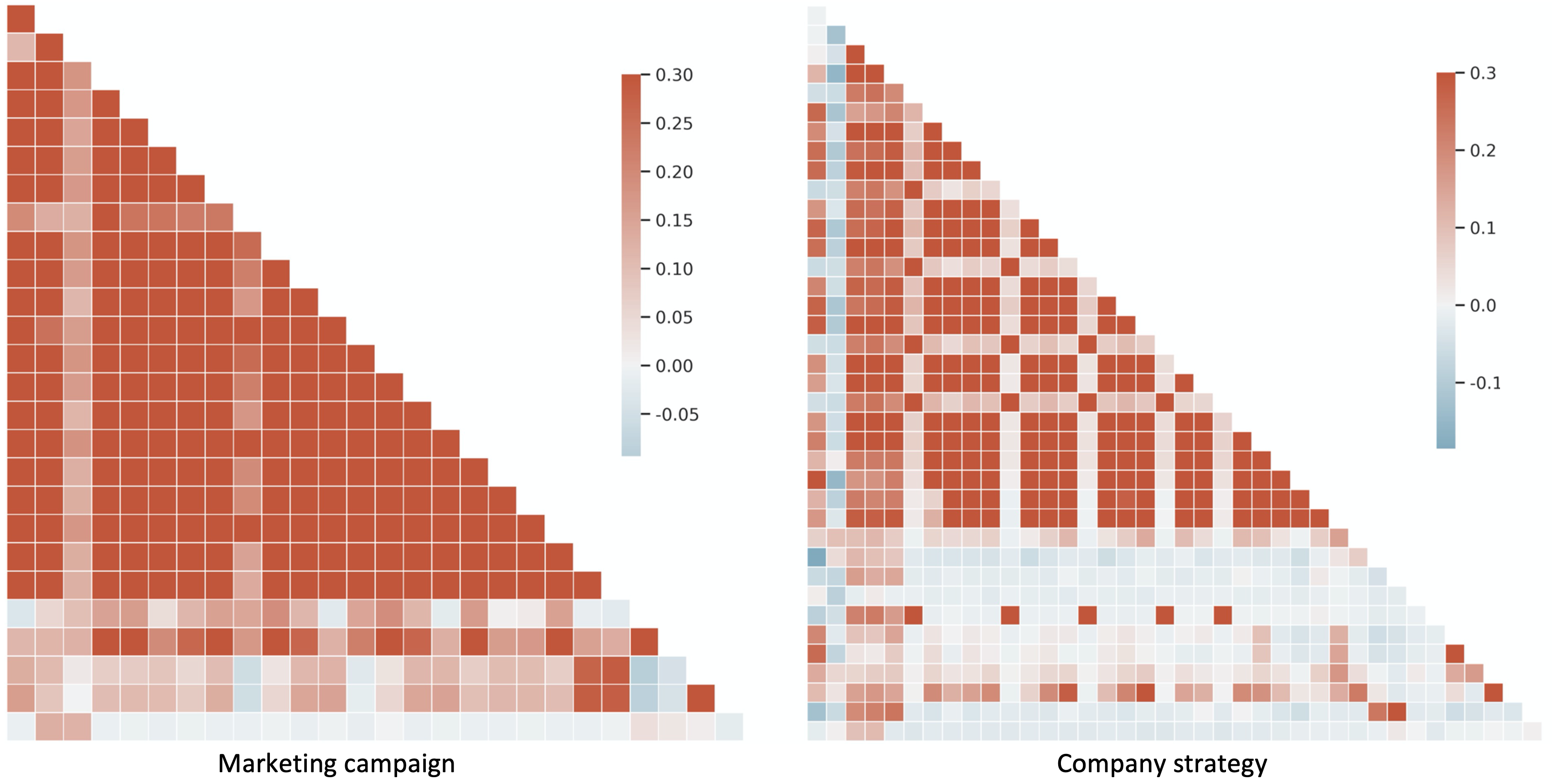}
    \vspace{-3mm}
  \caption{\label{fig:correlation} The correlation matrix for all variables in \textit{company strategy} and \textit{marketing campaign} datasets. Axis labels are omitted due to the nonpublic nature of the data.}
  \vspace{-3mm}
\end{figure}

\subsection{Baseline models}

We compare our model with baseline models on the real datasets with three scenarios, including substantial missingness, moderate missingness, and no missingness. As we mentioned in the ``Clarification on Treatment Effect Estimation'' section, the effect prediction task is essentially a regression task. The treatment effect estimation methodologies based on POF such as TARNET \cite{johansson2016learning}, CFR \cite{shalit2017estimating}, Dragonnet \cite{shi2019adapting}, GANITE \cite{yoon2018ganite}, FSRM\cite{chu2020matching}, IDRL \cite{chu2022learning}, GIAL \cite{chu2021graph}, CERL \cite{chu2023continual}, IGL \cite{sui2024invariant} are not applicable to our forecasting task. Therefore, we apply some classical regression models to this task. XGBoost \cite{chen2016xgboost} is a sparsity-aware algorithm for sparse data and a weighted quantile sketch for approximate tree learning. Lasso regression \cite{roth2004generalized} is an efficient linear regression with shrinkage (Lasso). FKNNR \cite{mailagaha2021generalized} is a fuzzy k-nearest neighbor regression model based on the usage of the Minkowski distance instead of the Euclidean distance. DNN \cite{larochelle2009exploring} is a deep multi-layer neural network with many levels of non-linearities, allowing them to represent highly nonlinear and highly-varying functions compactly. Support vector regression (SVR) \cite{awad2015support} is a type of support vector machine that supports linear and nonlinear regression. Transformer \cite{vaswani2017attention} is a deep learning model that uses self-attention, differentially weighting the importance of each element of the input data. For the moderate and substantial missing scenarios, we apply the single mean imputation (SMI) to these baseline models.

\subsection{Results}
We employ the mean absolute percentage error (MAPE) to measure the prediction accuracy of baseline models and our proposed CIPS. It usually expresses the accuracy as a ratio defined by the formula: $\text{MAPE} = \frac{100\%}{n} \sum_{i=1}^n |\frac{y_i-\hat{y}_i}{y_i}|$, where $\hat{y}_i$ is predicted value for subject $i$. As shown in Table \ref{result}, we provide the mean and standard error of MAPE for three scenarios, i.e., no missing, moderate missing, and substantial missing, in both datasets. Our CIPS consistently outperforms the state-of-the-art baseline methods with respect to both the mean and standard error of MAPE in all cases.

In addition, for all models, we can observe that the prediction accuracy gradually decreases as the degree of missingness increases. Compared to the drastic fluctuation in the baseline models, our CIPS is relatively robust to a high level of missingness. Another point worth noting is that, for most baseline models, the prediction accuracy for company strategy is significantly worse than that for marketing campaigns. To further figure out the reason why the baseline models have such a big gap, but the performance of our model is relatively consistent, we provide the distributions of customer features (confounder) grouped by treatment assignment and effect outcome in Figures \ref{fig:xt} and \ref{fig:xy}. There exists a worse selection bias caused by confounders in the company strategy dataset than that in the marketing campaign. It demonstrates that our model can more effectively deal with selection bias without sacrificing prediction accuracy. It also shows the necessity of causal analysis in practical prediction tasks. 

Besides, we want to explore the influence of multiple imputations. We conduct an ablation study, where the fully conditional specification multiple imputation in our model CIPS is replaced by a single mean imputation, i.e., CIPS (SMI). According to the last two rows in Table \ref{result}, the FCSMI remarkably improves the prediction accuracy compared with SMI. As shown in Figure \ref{fig:mistd}, we can observe the true value and predicted value with the standard error based on the multiple imputed datasets under the substantial missing scenario. The standard error in the company strategy is significantly greater than that in the marketing campaign dataset, which can also reflect the difficulty of predicting the effect outcome in the more biased company strategy dataset.

\begin{figure}[t]
  \centering
  \includegraphics[width=1\columnwidth]{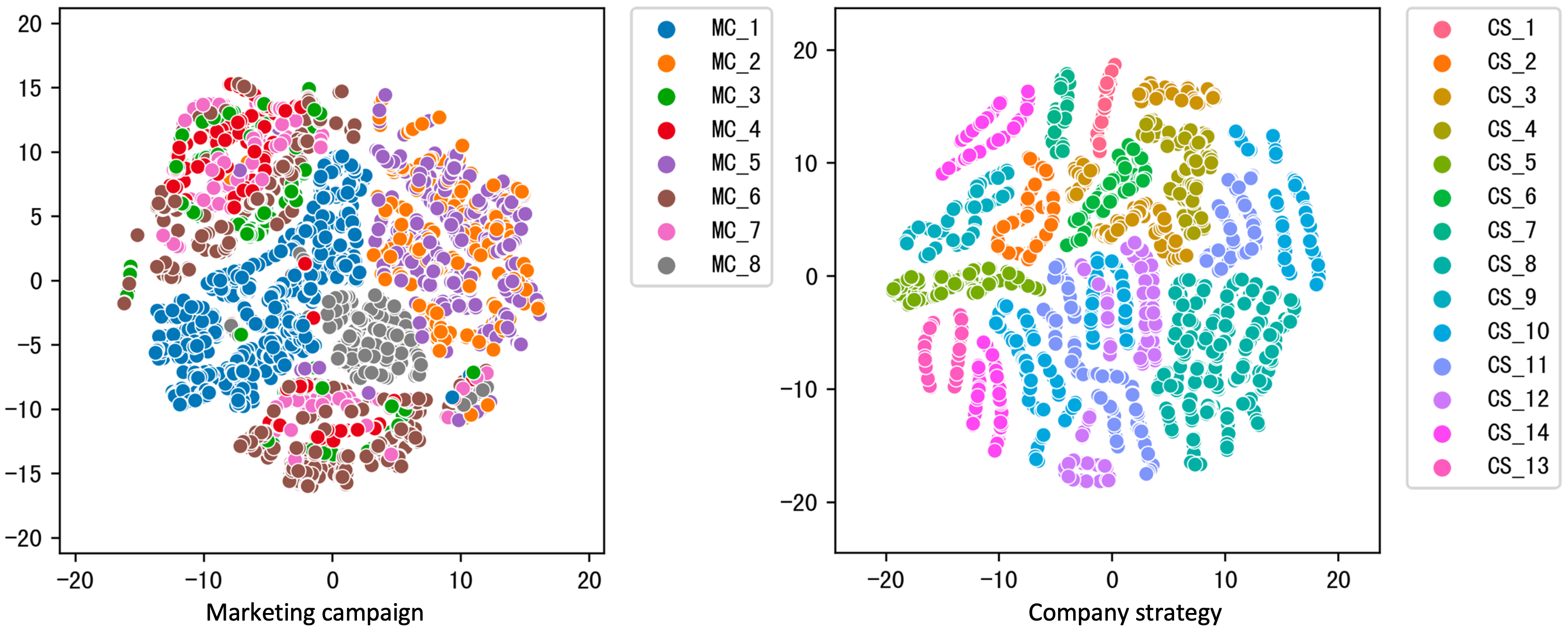}
  \vspace{-4mm}
  \caption{\label{fig:xt} Distribution of different \textbf{treatment assignment} (marketing campaign or company strategy) in \textbf{customer feature (confounder)} space. We can observe more distinct bias in the company strategy than in the marketing campaign.}   
  \vspace{-3mm}
\end{figure}

\begin{figure}[t]
  \centering
  \includegraphics[width=1\columnwidth]{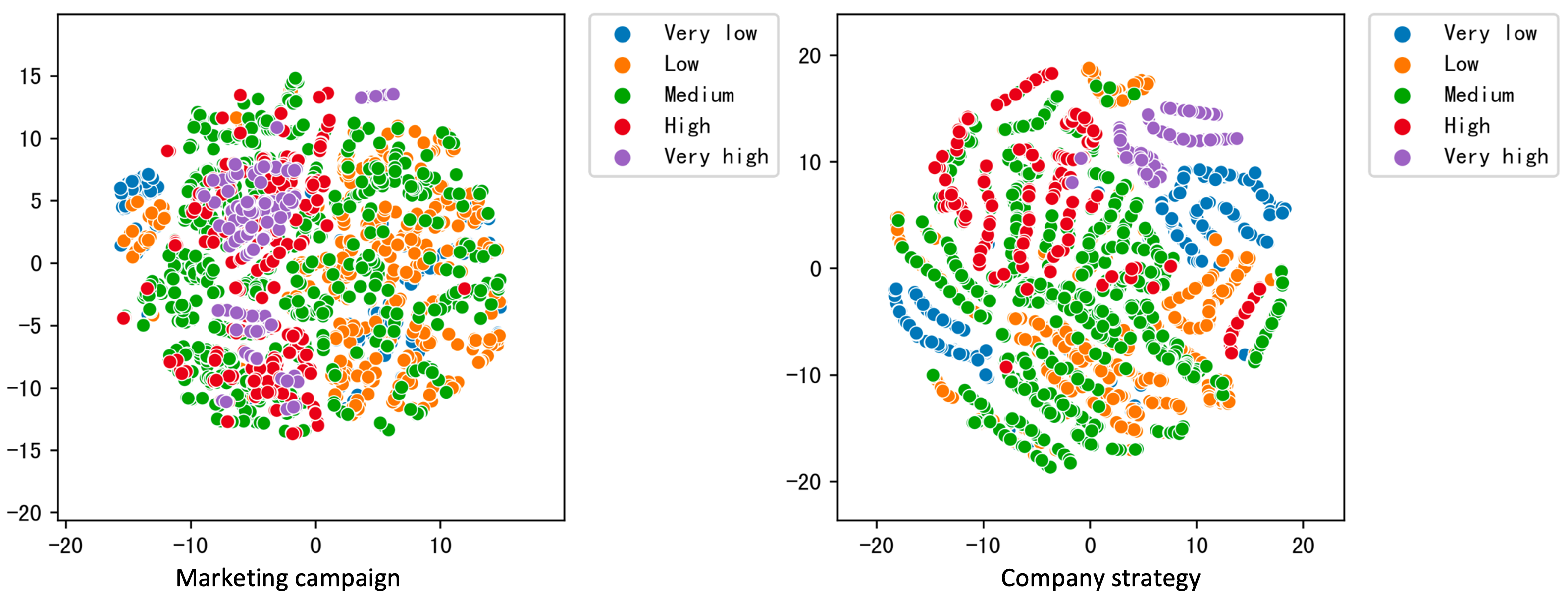}
  \vspace{-4mm}
  \caption{\label{fig:xy} Distribution of the different levels of \textbf{effect outcome} in \textbf{customer feature (confounder)} space. We can observe more distinct bias in the company strategy than in the marketing campaign.}
  \vspace{-3mm}
\end{figure}

\begin{figure}[t!]
  \centering
  \includegraphics[width=1\columnwidth]{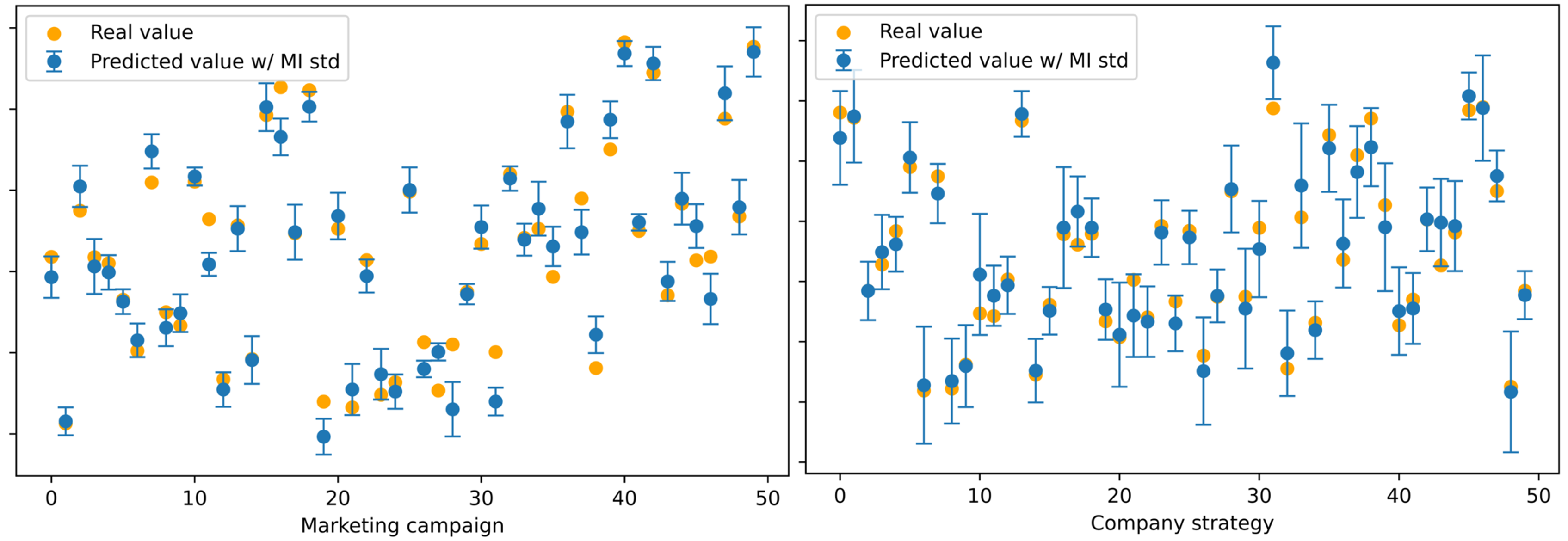}
    \vspace{-4mm}
  \caption{\label{fig:mistd} The true value and predicted value with the standard error based on the multiple imputed datasets under the substantial missing scenario. Axis labels are omitted due to the nonpublic nature of the data.} 
  \vspace{-3mm}
\end{figure}

\section{Deployment and Visualization}

\begin{figure}[t!]
  \centering
  \includegraphics[width=1\columnwidth]{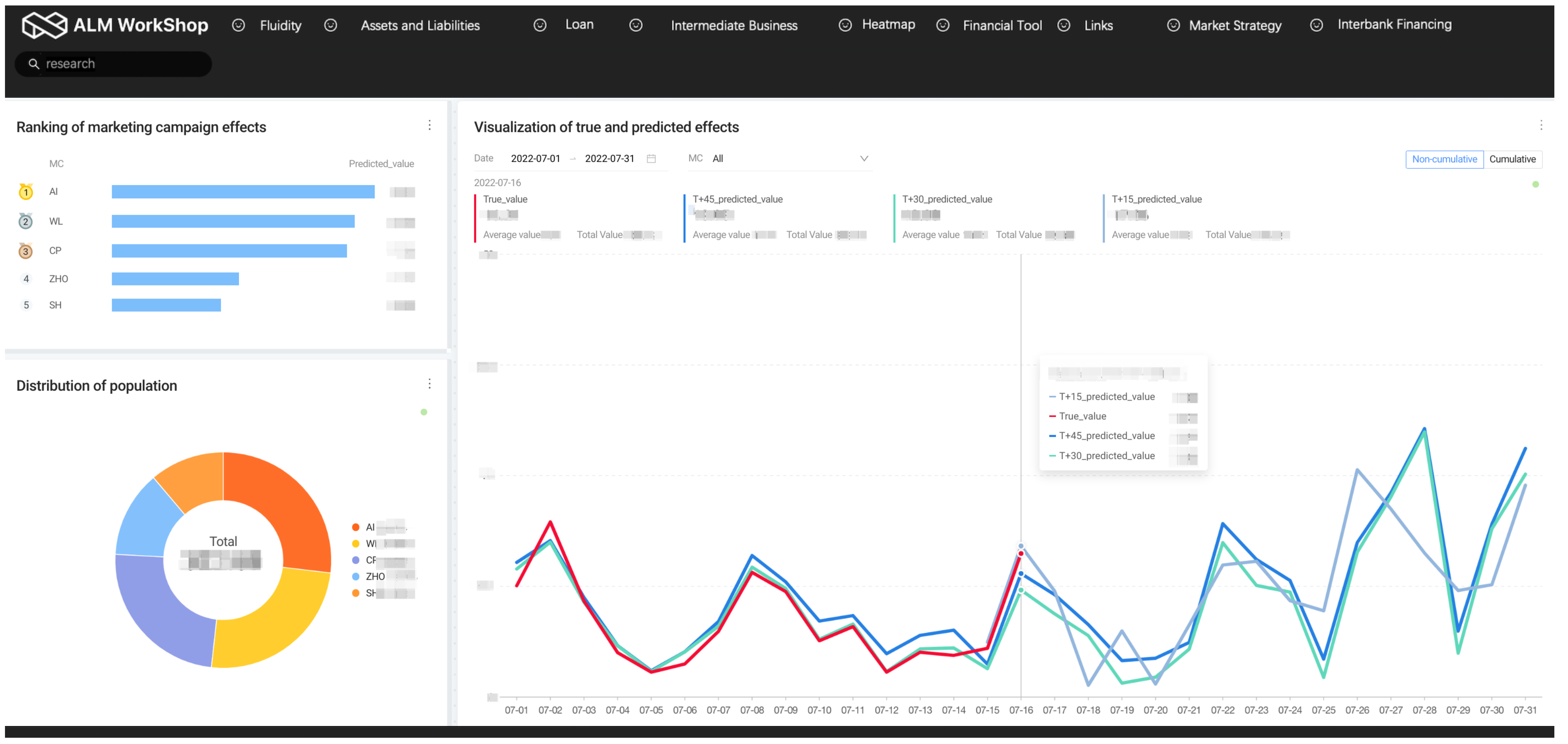}
   \vspace{-2mm}
  \caption{\label{fig:visualization}  The result display page of CIPS. It contains several different modules, such as the effect rankings of marketing campaigns, the distribution of customers in different marketing campaigns, and the visualization of true and predicted effects (including four types of lines, i.e., true value, T+45 predicted value for substantial missingness, T+30 predicted value for moderate missingness, and T+15 predicted value for no missingness).}

\end{figure}

\begin{figure}[t!]
  \centering
  \includegraphics[width=1\columnwidth]{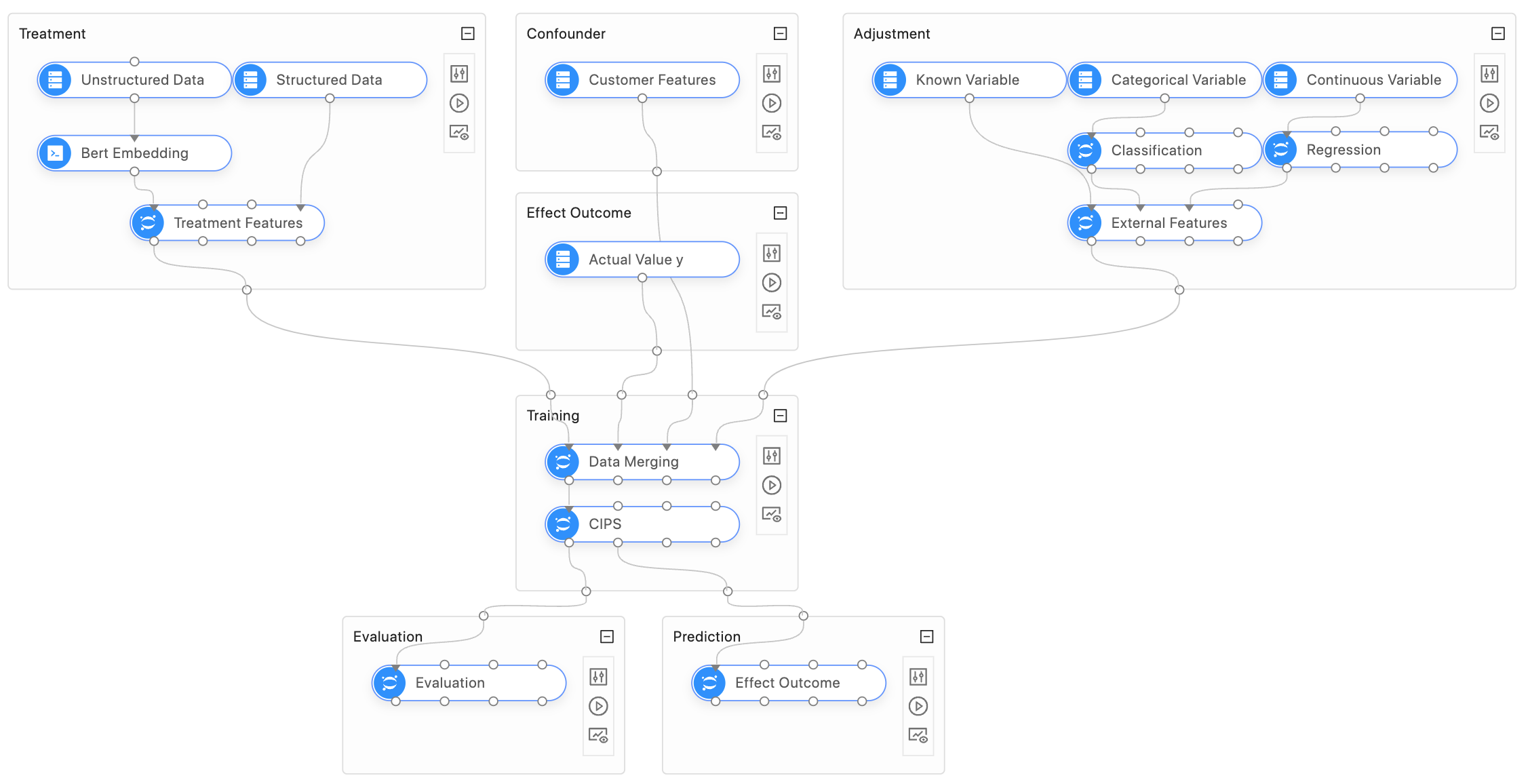}
  \caption{\label{fig:deployment} The deployment of CIPS. It contains four major modules, i.e., input, training, evaluation, and prediction.}
     \vspace{-4mm}
\end{figure}
 
This experiment is deployed on the internal algorithm platform. As shown in Figure \ref{fig:deployment}, it contains four major modules, i.e., input, training, evaluation, and prediction. The inputs of the experiment are from three types of variables, i.e., treatment, confounder, and adjustment variables. The treatment feature processing module can deal with both unstructured data and structured data. The unstructured data are texts provided by the strategy team to describe the marketing campaign or company strategy. The BERT \cite{devlin2018bert} is introduced to extract the feature vectors from texts. The confounder processing module is used to prepare the observed features of customers. The adjustment processing module is used to generate external factors. Some features are apriori-known future time-dependent inputs, such as special holidays or festivals, shopping promotion activities, and so on. They can be directly input into the model. For the unknown future features, we use pre-trained classification and regression models to predict the categorical and continuous variables, respectively. For example, a regression model predicts a big promotion's gross merchandise value (GMV). After preparing the data, the core of this system is our training module, which splices all the above-processed features and inputs them into our CIPS for training, evaluation, and prediction. In addition, our CIPS is deployed to a result exhibition system. The result display page in the exhibition system is shown in Figure \ref{fig:visualization}, which contains several different modules, such as the effect rankings of marketing campaigns, the distribution of customers in various marketing campaigns, and the visualization of true and predicted effects (including four types of lines, i.e., true value, T+45 predicted value for substantial missingness (45 days in advance), T+30 predicted value for moderate missingness (30 days in advance), and T+15 predicted value for no missingness (15 days in advance)). This marketing campaign effect forecasting system can effectively provide a robust and explainable prediction outcome and visually help guide the business planning and operation in advance.

\section{Conclusion}
To fulfill a robust and explainable AI-based forecasting task, we propose a causal interventional prediction system (CIPS) based on a variational autoencoder and a fully conditional specification of multiple imputations. The in-depth analysis of the underlying causality involved in the complex prediction task helps to improve the AI-based forecasting system concerning robustness and explainability. 

\newpage
\bibliographystyle{ACM-Reference-Format}
\bibliography{main}
\end{document}